\documentclass{article}

\PassOptionsToPackage{numbers,compress}{natbib}
\usepackage[dblblindworkshop,final]{neurips_2026}

\usepackage[utf8]{inputenc} 
\usepackage[T1]{fontenc}    
\usepackage[hidelinks]{hyperref} 
\usepackage{url}            
\usepackage{booktabs}       
\usepackage{amsfonts}       
\usepackage{nicefrac}       
\usepackage{microtype}      
\usepackage{xcolor}         
\usepackage{amsmath}
\usepackage{amssymb}
\usepackage{amsthm}
\usepackage{graphicx}
\usepackage{fontawesome5}

\newtheorem{proposition}{Proposition}

\DeclareMathOperator{\sg}{sg}
\DeclareMathOperator{\tr}{tr}
\DeclareMathOperator{\diam}{diam}
\DeclareMathOperator{\rank}{rank}
\newcommand{\Sp}{\mathcal{S}}
\newcommand{\Mol}{\mathcal{M}}
\newcommand{\Cond}{\mathcal{A}}
\newcommand{\Cand}{\mathcal{K}}
\newcommand{\R}{\mathbb{R}}
\newcommand{\E}{\mathbb{E}}

\workshoptitle{Representations for the Physical Sciences Workshop}
\title{Separability Is Not Alignability: \\Representations for Spectrum-to-Molecule Retrieval}
\author{
  \begin{tabular}{@{}c@{\hspace{1.5cm}}c@{}}
    \textbf{Karol Musielinski}$^{1,2}$ &
    \textbf{Fanurs Chi-En Teh}$^{1}$ \\
    \texttt{kmusieli@cs.cmu.edu} &
    \texttt{fanurs@cmu.edu} \\[0.6em]
    \textbf{Takahiko Koyama}$^{3}$ &
    \textbf{Artur Dubrawski}$^{1}$ \\
    \texttt{takahiko.koyama@keio.jp} &
    \texttt{awd@cs.cmu.edu}
  \end{tabular}
  \\[3.0em]
  $^{1}$Auton Lab, Carnegie Mellon University \\
  $^{2}$University of Warsaw \qquad
  $^{3}$Keio University
}

\begin{document}

\maketitle

\begin{abstract}
  Identifying a small molecule from its tandem mass spectrum is limited by paired data.
  Foundation models offer a way around this scarcity: pretrain a spectrum encoder on unlabeled spectra and a molecule encoder on a large corpus of structures, then align the two frozen representations, as in vision--language and as the Platonic Representation Hypothesis suggests.
  We audit this route on MassSpecGym with spectrum encoders retrained free of test-fold contamination.
  Alignment needs a metric relating distinct molecules, but the objectives that produce spectrum encoders never see molecular structure: we prove that some predictive objectives fix molecular identity without fixing that metric.
  Moreover, we find that a candidate-based objective rewards a spectrum-independent prior over the benchmark's decoys, which we isolate.
  What ultimately addresses data scarcity is not architectural: simulated spectra surpass frozen geometry.
\end{abstract}

\begin{center}
  \faGithub\enspace \texttt{\href{https://github.com/autonlab/msms-retrieval}{autonlab/msms-retrieval}}
\end{center}

\section{Introduction}
\label{sec:intro}

Tandem mass spectrometry (MS/MS) identifies a small molecule by fragmenting it and measuring the resulting fragment ions.
\emph{Retrieval} casts identification as ranking candidate structures against a query spectrum \citep{bushuiev_massspecgym_2024}.
To overcome the small size of spectral libraries, \citet{bushuiev_self-supervised_2026} developed a spectrum foundation model, DreaMS, pretrained on millions of unlabeled spectra, which \citet{krzakala_msalign_2026} subsequently proposed to align with a molecule foundation model by learning a projection into a shared embedding space.

This rationale is borrowed from vision--language, where aligning frozen unimodal encoders rivals joint pretraining \citep{zhang_sail_2025,vouitsis_fusemix_2024}.
More broadly, it echoes the Platonic Representation Hypothesis \citep{huh_platonic_2024}.
\citet{kalia_jestr_2025} state outright that molecules and their spectra are ``views of the same data.''
These parallels motivate a closer examination of the assumptions behind transferring frozen-encoder alignment to molecule--spectrum retrieval, and of where those assumptions may break down.

\textbf{Contributions.}
We audit this pipeline on MassSpecGym with spectrum encoders retrained free of test-fold contamination.
\textbf{(i)}~Satisfying the common assumptions is difficult in practice; frozen alignment instead relies on separability and metric compatibility (\S\ref{sec:setup}).
\textbf{(ii)}~We introduce MS-JEPA, a cross-condition predictive spectrum encoder, and prove that such objectives fix molecular identity without fixing the metric (\S\ref{sec:msjepa}).
\textbf{(iii)}~Measured, the two properties come apart: MS-JEPA separates molecules most strongly, yet retrieves worst when frozen (\S\ref{sec:geometry}).
\textbf{(iv)}~The candidate objective rewards a spectrum-independent prior over the benchmark's decoys, which a spectrum-blind control isolates (\S\ref{sec:adapter}).
\textbf{(v)}~End-to-end training makes spectrum pretraining nearly irrelevant, while simulated spectra of $5.9$M molecules double a fresh encoder's MRR (\S\ref{sec:baselines}).

\section{The alignment premise}
\label{sec:setup}

Let $\Mol$ be molecular structures, $\Cond$ acquisition conditions (adduct, instrument, and collision energy), and $\Sp$ spectra.
For $m\in\Mol$ and $a\in\Cond$, a measurement is $s\sim p(\,\cdot\mid m,a)$.
Retrieval ranks a candidate set $\Cand(s)\subset\Mol$ containing the true molecule $m^\star$.
Frozen alignment fixes $f:\Sp\times\Cond\to\R^{d_s}$ and $g:\Mol\to\R^{d_m}$, learns adapters $u_\theta,v_\theta$ into $\R^d$, and assigns the score
\begin{equation}
  \rho_\theta(s,m)=\tau^{-1}\big\langle \bar u_\theta(f(s,a)),\,\bar v_\theta(g(m))\big\rangle,
  \qquad \bar x \equiv x/\lVert x\rVert_2 .
  \label{eq:score}
\end{equation}

\textbf{What is borrowed.}
The Platonic Representation Hypothesis offers one explanation for why the vision--language recipe works \citep{huh_platonic_2024}.
It treats a representation as the kernel $K_f(x,x')=\langle f(x),f(x')\rangle$ it induces and claims that the kernels of different modalities converge as models scale, with convergence measured by neighborhood agreement (mutual-$k$NN), because coordinates need not coincide.
If the kernels already agree, a \emph{low-complexity adapter} should suffice.

\textbf{What it assumes.}
The hypothesis argues from a latent world state $z$ observed through modality-specific maps $x=\mathrm{obs}_x(z)$ and $y=\mathrm{obs}_y(z)$, taken to be deterministic and bijective.
Multitask pressure, capacity, and simplicity bias then push every modality's kernel toward the kernel of $z$.
However, the authors state that \emph{lossy} or \emph{stochastic} observations lie outside the argument, and mutual-$k$NN itself becomes ambiguous under non-bijective correspondence \citep{tjandrasuwita_alignment_2025,koepke_platos_cave_2026}.
If we assume the world state is the molecule itself, then the spectrum side departs from a bijection in two ways.
First, one molecule can yield many spectra, as the measurement itself is a condition-dependent stochastic draw.
Second, some isomers give near-identical spectra \citep{crotti_tandem_2023,wu_statistical_2023}.

\textbf{What kernel agreement means here.}
To compare the kernels, we lift the molecule kernel to spectra, $K_g(s,s')=\langle g(m(s)),g(m(s'))\rangle$ with $m(s)$ the molecule that produced $s$.
Write $\Sigma_m\subset\R^{d_s}$ for the support of normalized spectrum embeddings $f(s,a)$ over $s\sim p(\,\cdot\mid m,a)$.
For unit-normalized features, kernel error at most $\varepsilon$ forces $\diam(\Sigma_m)\le\sqrt{2\varepsilon}$, and exact agreement collapses every $\Sigma_m$ to a point (Prop.~\ref{prop:raw_collapse}; App.~\ref{app:proof}).
This is a stringent requirement for a spectrum encoder trained on variable, condition-dependent measurements.
Indeed, across audited encoders, the mean squared distance between same-molecule embeddings remains $0.14$--$0.51$ (Table~\ref{tab:geometry-full}; App.~\ref{app:probes}).

\textbf{Two ingredients.}
Such collapse is not necessary for retrieval: the correct molecule need only score above the other candidates after adaptation.
Successful frozen alignment with limited paired data depends on two properties: \emph{separability}, meaning that spectrum embeddings distinguish molecular identities, and \emph{metric compatibility}, meaning that their between-molecule geometry is already similar to that of the molecule tower.
Clustering spectra by molecular identity may facilitate alignment: in vision--language, \citet{zhang_sail_2025} find that clustering quality strongly correlates with retrieval performance after lightweight alignment.
Yet separability alone does not constrain the arrangement of different molecules and therefore cannot guarantee metric compatibility.

\section{MS-JEPA: learning separable spectrum representations}
\label{sec:msjepa}

To investigate whether stronger separability supports better retrieval performance, we train an encoder to capture molecular information shared across measurements under different acquisition conditions.

\textbf{Objective.}
Molecular identity is invariant across acquisition conditions, and under fixed conditions a molecule's fragmentation pattern is highly reproducible \citep{lam2011building,wang_neural_2026,bremer2022predict}.
We therefore predict a spectrum's representation from replicate spectra of the same molecule and the target condition \citep{baevski_data2vec_2022}.
With online encoder $f_\theta$, a target encoder $f_\xi$ whose weights are an exponential moving average of $\theta$ \citep{he_momentum_2020}, a stop-gradient $\sg$, a context aggregator \(\mathrm{Agg}\), and a condition-aware predictor $h_\psi$,
\begin{equation}
  \mathcal{L}_{\mathrm{JEPA}}
  =\E_{m}\E_{(C,t)}\Big[1-\cos\!\big(
    h_\psi(\mathrm{Agg}_{i \in C}(f_\theta(s_i,a_i), a_i), a_t),
    \;\sg[f_\xi(s_t,a_t)]
  \big)\Big].
  \label{eq:jepa}
\end{equation}
\citet{ghaemi_seq-jepa_2026} showed that this family of models learns invariant and equivariant representations from sequences of views and actions.
Here the action is the change of acquisition condition.

\textbf{Architecture.}
$f_\theta$ is a transformer over peak tokens with FiLM conditioning \citep{perez_film_2017}: with $e(a)$ an embedding of the ion condition (adduct, charge, polarity), block $\ell$ scales its attention and MLP residual updates by gains $1+\gamma_\ell(e(a))$.
The representation is a learned \([\mathrm{CLS}]\) token.
The predictor $h_\psi$ is a shallow MLP whose hidden activations are FiLM-modulated by the full target condition.
Each training episode samples one molecule, its target spectrum, and up to three context spectra.

The EMA teacher provides slowly evolving prediction targets to stabilize training.
In principle, the aggregator combines the representations of multiple context spectra into a molecule-level representation, then $h_\psi$ predicts the condition-dependent features.
We use a cosine loss because it cannot be reduced by simply shrinking embedding norms.
Training details are in App.~\ref{app:methods:msjepa}.

\textbf{Limitations.}
The objective compares spectra of the same molecule only, without explicitly constraining the arrangement of different molecules.
In particular, it provides no direct incentive for their geometry to agree with that of the molecule tower.

\begin{proposition}[Identity without metric]
  \label{prop:relabel}
  Let $L(f)=\inf_h L(f,h)$ over measurable heads $h$, where $L(f,h)$ is the expectation of a function of the cosine between the head's prediction from a same-molecule context $\{(f(s_i,a_i),a_i)\}_{i\in C}$ with target condition $a_t$ and the target $f(s_t,a_t)$.

  For a finite or countable molecule family, if the supports $\{\Sigma_m\}$ are pairwise disjoint and $R$ acts on each $\Sigma_m$ as an orthogonal map $Q_m$ with the images again pairwise disjoint, then $L(R\circ f)=L(f)$.

  Moreover, if molecular identity is determined by $(s,a)$, then for any molecule tower with distinct unit-normalized vectors in dimension $d\ge 2$ and candidate lists determined by the true molecule, each containing at least one distractor, some such pair $f,R$ ranks the true molecule first for every query under direct cosine retrieval with $f$ and never first with $R\circ f$.
\end{proposition}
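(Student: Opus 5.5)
The proof splits into the two claims: invariance of $L$ under $R$, then the retrieval construction.

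\textbf{Invariance.} Since the supports $\Sigma_m$ are pairwise disjoint, every embedding $x=f(s,a)$ determines its molecule $m(x)$, and hence the orthogonal map $Q_{m(x)}$ that $R$ applies to it. The same holds for $R\circ f$, because the images $Q_m\Sigma_m$ are again pairwise disjoint. Given any measurable head $h$ for $f$, define a head for $R\circ f$ by
\[
h'(\{(y_i,a_i)\},a_t)=Q_m\,h(\{(Q_m^{-1}y_i,a_i)\},a_t),
\]
where $m$ is the molecule whose image contains the $y_i$. All context points share a molecule, so $m$ is well defined. On every same-molecule episode, $h'$ applied to the transformed context returns $Q_m$ times $h$'s prediction, and the target is $Q_m f(s_t,a_t)$. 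Orthogonal maps preserve cosines, so the integrands coincide pointwise, giving $L(R\circ f,h')=L(f,h)$ and hence $L(R\circ f)\le L(f)$. Running the same construction with $Q_m^{-1}$ gives the reverse inequality.

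Measurability of $h'$ needs a check. The family is countable and each piece $\{y: y\in Q_m\Sigma_m\}$ can be taken measurable (e.g.\ Borel supports), so $h'$ is a countable patching of measurable maps. This works because pairwise disjointness lets us write each piece as a countable union, and no separation between supports is needed.

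\textbf{Construction.} Take $f$ to be perfectly aligned: since identity is determined by $(s,a)$, set $f(s,a)=g(m(s))\in\R^d$, unit norm. Then $\Sigma_m=\{g(m)\}$, and these singletons are pairwise disjoint because the tower vectors are distinct. Direct cosine retrieval scores candidate $m'$ by $\langle g(m^\star),g(m')\rangle$. The true molecule scores $1$, and every other candidate scores strictly less than $1$, since distinct unit vectors have inner product below $1$. So $f$ ranks the true molecule first for every query.

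For $R$, the candidate list $\Cand$ is a function of $m^\star$ and contains a distractor. Choose one distractor $m'(m^\star)\ne m^\star$ and pick an orthogonal $Q_{m^\star}$ sending $g(m^\star)$ to $g(m'(m^\star))$; this exists for unit vectors in any dimension $d\ge 2$. Now the query scores the distractor at $1$ and the true molecule at $\langle g(m'),g(m^\star)\rangle<1$, so the true molecule is never first.

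\textbf{Main obstacle.} The images $Q_m\Sigma_m=\{g(m'(m))\}$ must be pairwise disjoint, so $m\mapsto m'(m)$ must be injective. This is where I expect the real work.

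My first attempt is to pick distractors injectively via Hall's theorem on the bipartite graph from molecules to their distractors. Hall's condition can fail, for instance if every list contains the same single distractor. In that case I would avoid mapping exactly onto $g(m')$. Instead, map $g(m^\star)$ to a point $y_{m^\star}$ close to $g(m')$ with $\langle y_{m^\star},g(m')\rangle>\langle y_{m^\star},g(m^\star)\rangle$, with the $y_{m^\star}$ chosen distinct.

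The condition defines an open half-space intersected with the sphere, which is an infinite set when $d\ge 2$. With countably many molecules, the points $y_m$ can therefore be chosen distinct greedily, also avoiding each other, and orthogonal maps sending $g(m)$ to $y_m$ exist. This handles all cases uniformly, so I would use it directly.
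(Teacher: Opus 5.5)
Your proposal is correct and follows the paper's proof. It uses the same conjugated head $h'=Q_m\,h(Q_m^{-1}\,\cdot\,)$, with the inverse maps giving the reverse inequality and countable patching of Borel pieces giving measurability, and the same perfectly aligned $f(s,a)=\bar g(m(s,a))$ whose singleton supports $Q_m$ moves to pairwise distinct points $y_m$ chosen greedily near a fixed distractor. The only difference is cosmetic: you take $y_m$ in the open hemisphere $\{y:\langle y,\bar g(m'(m))-\bar g(m)\rangle>0\}$, which encodes the strict score inequality directly, whereas the paper takes $y_m$ within angle $\delta_m/3$ of the distractor and applies the reverse triangle inequality. You should also add one detail the paper states explicitly: define $h'$ as a fixed nonzero vector on contexts not contained in a single image.
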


\noindent
Proof in App.~\ref{app:proof}.
The proposition idealizes the objective~\eqref{eq:jepa} with \(f = f_\theta = f_\xi\).
It shows that, even under perfect separability, any preference for a geometry compatible with $g$ must arise from additional inductive biases.
DreaMS's contrastive loss falls outside this class, but likewise receives no direct supervision from $g$.

\section{Against separability as a sufficient condition}
\label{sec:geometry}

To test the expectations outlined in \S\ref{sec:setup}, we measure both separability and cross-modal agreement on the challenging MCES split of MassSpecGym \citep{bushuiev_massspecgym_2024}, which tests generalization to structurally distinct molecules.
We compare three frozen encoders: DreaMS after self-supervised pretraining (masked-peak prediction), its version after contrastive fine-tuning, and MS-JEPA, whose $[\mathrm{CLS}]$ token and aggregated representation we probe separately.
We reproduce both DreaMS stages to avoid test-spectra contamination from the released model (App.~\ref{app:methods:dreams}).

\textbf{Evaluation framework.}
Within-modality separability is scored by the silhouette \citep{rousseeuw1987silhouettes} and by $k$NN purity, normalized by its ceiling.
Cross-modality agreement is measured by mutual-$k$NN between molecule-level spectrum centroids and a frozen molecule tower, following \citet{huh_platonic_2024} (App.~\ref{app:probes}).
Agreement among molecule encoders ranges from $0.11$ to $0.29$, providing a same-modality reference (App.~\ref{app:molenc}).
Frozen retrieval uses the adapters of \citet{krzakala_msalign_2026}, trained with in-batch negatives under one recipe shared by all models, and is evaluated by Recall@1 and mean reciprocal rank (MRR) of the true molecule (App.~\ref{app:eval}).

\textbf{DreaMS.}
The self-supervised stage separates weakly, with silhouette $+0.14$ (Fig.~\ref{fig:geometry}) and purity $0.68$, and its agreement with MoLFormer is $0.09$: above chance, but only a third of what the best-agreeing molecule encoders share.
Contrastive fine-tuning doubles the silhouette to $+0.28$ and increases purity to $0.78$, but leaves cross-modal agreement and retrieval MRR ($0.23$) essentially unchanged.

\begin{figure}[tbp]
  \centering
  \includegraphics[width=0.95\linewidth]{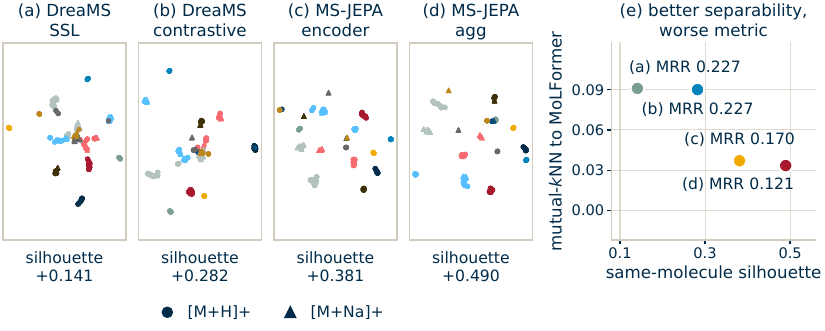}
  \caption{%
    \textbf{Separability is not alignability.}
    (a--d) UMAP of the same test spectra (12 molecules with ${\ge}12$ spectra) using four different representations.
    Color is the molecule.
    (e) Separability against cross-modal agreement with MoLFormer, annotated with frozen MRR (Table~\ref{tab:geometry-full}).%
  }
  \label{fig:geometry}
\end{figure}

\textbf{MS-JEPA.}
Despite using only molecule-grouped spectra from the MassSpecGym training fold, without large-scale pretraining, MS-JEPA achieves substantially stronger separability: silhouette $+0.49$ and normalized purity $0.91$ for the post-aggregation embedding.
Thus, cross-condition prediction is promising for learning a representation that distinguishes molecular identities, even with limited training data.
Yet MS-JEPA has the weakest agreement with MoLFormer (mutual-$k$NN $0.03$) and the worst frozen retrieval performance: $0.12$ MRR and $0.05$ Recall@1.
The strongest clustering therefore produces the weakest retrieval of the three encoders, contrary to the expectation that separability alone should facilitate frozen alignment.

\textbf{Supervising cross-modal geometry.}
We added VA-VAE's alignment loss \citep{yao_vavae_2025} to MS-JEPA's pretraining and DreaMS's contrastive stage, directly biasing the encoders towards MoLFormer's geometry.
Notably, both interventions have a positive effect on retrieval while reducing same-molecule silhouette.
MS-JEPA's frozen MRR rises to $0.19$, while mutual-$k$NN increases from $0.03$ to $0.06$.
For DreaMS, linear CKA with MoLFormer rises from $0.20$ to $0.46$, with little change in mutual-$k$NN and frozen MRR (App.~\ref{app:msjepa}).

\section{What the adapter learns}
\label{sec:adapter}

If the frozen towers do not agree, the adapters carry the performance, as the molecule foundation model makes it easy to exploit the benchmark's decoy distribution.

\textbf{The candidate objective.}
MSAlign optimizes InfoNCE \citep{oord_cpc_2018} over the candidate set $\Cand(s)$, drawn from PubChem using a mass or formula filter (App.~\ref{app:leak}, Eq.~\eqref{eq:cand}).
Writing $q$ for the induced proposal and $p$ for the library-structure marginal, the population minimizer is, up to a function of $s$ alone \citep{oord_cpc_2018,poole_variational_2019},
\begin{equation}
  \rho^\star(s,m)=\log\frac{p(m\mid s)}{q(m)}
  =\underbrace{\log\frac{p(m)}{q(m)}}_{\text{(A) prior}}
  +\underbrace{\log\frac{p(s\mid m)}{p(s)}}_{\text{(B) PMI}} .
  \label{eq:decomp}
\end{equation}
(B) is the spectrum--structure pointwise mutual information.
(A) does not depend on $s$: it is the log-odds that a structure is a library compound rather than a decoy.
With in-batch negatives \(p = q\), so (A) vanishes.
However, for conventional benchmark candidate pools \(p\) and \(q\) are substantially different \citep{gupta_confronting_2026}, raising a potential concern.

\textbf{Experiments.}
We rank every candidate by its expected score over the training spectra, $\mathbb{E}_{s}[\rho(s,m)]$, which sees neither the spectrum nor its acquisition conditions (App.~\ref{app:leak}), and measure the remaining ranking accuracy.
With in-batch negatives, performance falls to chance ($\Delta_0\le0$, Table~\ref{tab:results}, left), consistent with \eqref{eq:decomp}.
With benchmark candidates as negatives, $\Delta_0$ reaches $0.12$--$0.14$, accounting for at least half the full model's gain over random.
Candidate training increases blind MRR more than full MRR, reducing the spectral margin $\Delta_s$.
Similar but weaker effects occur in other candidate-trained models, particularly without pretrained molecule representations (App.~\ref{app:candtrained}).
This is particularly significant because the decoys do not follow the same stringent train--test splitting protocol used to support generalization claims \citep{liu_massspecgym_2026}.

\section{Improved baselines}
\label{sec:baselines}

Frozen alignment offers an efficient way to exploit foundation models trained on unpaired data, while preserving pretrained features that fine-tuning can distort under distribution shift \citep{kumar_fine-tuning_2022}.
Yet it depends on the geometry of independently trained encoders.
One alternative is to use forward spectrum predictors to generate paired data for millions of molecular structures.
Unlike spectrum-only pretraining, this directly supervises spectrum--molecule alignment and can cover molecules absent from experimental spectral libraries.

\textbf{End-to-end adaptation.}
We first unfreeze the spectrum encoder while keeping MoLFormer fixed (App.~\ref{app:methods:msmoljepa}, \ref{app:baselines}).
DreaMS improves from $0.227$ to $0.244$--$0.255$ MRR across three learning rates, and MS-JEPA from $0.121$ to $0.243$ using its aggregated representation.
Remarkably, a fresh encoder matching MS-JEPA's architecture reaches $0.232$ MRR.
Thus, despite the motivation for freezing under distribution shift, spectrum pretraining provides little advantage once paired supervision is available.

\textbf{Simulated paired data.}
We pretrain on spectra of $5.9$M leakage-audited molecules, simulated by GLACIER and FraGNNet \citep{wang_glacier_2026,young_fragnnet_2025} (App.~\ref{app:sim}), before fine-tuning on the $25$k measured training molecules.
Table~\ref{tab:results} compares global cosine scoring with fine-grained MaxSim \citep{chen_flare_2026} (App.~\ref{app:methods:msmoljepa}) under both initializations.
Simulated pretraining substantially improves global retrieval ($+0.13$--$0.14$ MRR), but its greater benefit is enabling fine-grained scoring: MaxSim adds virtually nothing on real data alone, yet gains another $0.09$--$0.10$ MRR after simulated pretraining.
The fresh encoder reaches $0.463$ MRR, within $0.013$ of pretrained MS-JEPA ($0.476$).

\textbf{Candidate-aware training.}
Adding the candidate objective raises the fresh model to $0.566$ MRR and $0.425$ Recall@1, ahead of the pretrained model ($0.542$ MRR).
However, as in \S\ref{sec:adapter}, it raises spectrum-blind MRR by $0.070$ and $0.085$.
Meanwhile, longer training on the real corpus saturates and eventually hurts validation performance (App.~\ref{app:baselines}).
Expanding molecular coverage, rather than refining the frozen representation or repeatedly fitting the same molecules, delivers the largest gains.
Fine-grained scoring benefits most once that broader supervision is available.

\begin{table}[tbp]
  \centering\scriptsize
  \renewcommand{\arraystretch}{0.85}
  \caption{%
    \textbf{Spectrum-blind ranking (left) and improved baselines (right).}
    Left: frozen DreaMS contrastive; $\Delta_0=\mathrm{MRR}_{\text{blind}}-\mathrm{MRR}_{\text{rand}}$, $\Delta_s=\mathrm{MRR}_{\text{full}}-\mathrm{MRR}_{\text{blind}}$.
    Right: Ablation of simulated pretraining, fine-grained scoring, and candidate-aware training for fresh and MS-JEPA-pretrained encoders (App.~\ref{app:baselines}).%
  }
  \label{tab:results}
  \setlength{\tabcolsep}{5pt}

  \begin{minipage}[t]{0.45\linewidth}\centering
    \begin{tabular}{llccc}
      \toprule
      Tower & Negatives & MRR & $\Delta_0$ & $\Delta_s$ \\
      \midrule
      ChemBERTa & in-batch  & 0.147 & $-0.004$ & 0.127 \\
      ChemBERTa & candidate & 0.211 & 0.122 & 0.064 \\
      MoLFormer & in-batch  & 0.227 & $-0.006$ & \textbf{0.209} \\
      MoLFormer & candidate & \textbf{0.303} & \textbf{0.144} & 0.135 \\
      \bottomrule
    \end{tabular}
  \end{minipage}\hfill
  \begin{minipage}[t]{0.53\linewidth}\centering
    \begin{tabular}{llcccc}
      \toprule
      & & \multicolumn{2}{c}{fresh} & \multicolumn{2}{c}{pretrained} \\
      \cmidrule(lr){3-4}\cmidrule(lr){5-6}
      Training & Score & MRR & R@1 & MRR & R@1 \\
      \midrule
      real & global & 0.232 & 0.124 & 0.243 & 0.128 \\
      real & MaxSim & 0.241 & 0.130 & 0.234 & 0.129 \\
      sim$\to$real & global & 0.369 & 0.229 & 0.374 & 0.238 \\
      sim$\to$real & MaxSim & 0.463 & 0.334 & 0.476 & 0.341 \\
      sim$\to$real & MaxSim + cand. & \textbf{0.566} & \textbf{0.425} & 0.542 & 0.400 \\
      \bottomrule
    \end{tabular}
  \end{minipage}
\end{table}

\section{Conclusion}
\label{sec:conclusion}

Frozen spectrum--molecule alignment borrows the premise that independently pretrained encoders share enough geometry for lightweight alignment \citep{huh_platonic_2024,zhang_sail_2025}.
Our results expose the limits of this premise for MS/MS.
Molecular separability does not determine between-molecule geometry: MS-JEPA achieves the strongest same-molecule clustering yet the weakest frozen retrieval.
Across spectrum and molecule encoders, cross-modal agreement remains low.
Concurrent work likewise reports near-chance label-free alignment between DreaMS and molecular fingerprints on MassSpecGym \citep{schnaus_rosetta_2026}.
More generally, spectral similarity is an imperfect proxy for molecular similarity: the cosine--Tanimoto correlation \citep{bushuiev_self-supervised_2026,chen_flare_2026} of self-supervised DreaMS is $0.63$ under similarity-balanced sampling but only $0.23$ under uniform pair sampling, with a substantial correlation also obtained from precursor mass alone (App.~\ref{app:correlation}).
Together, these findings reveal a broader difficulty with frozen alignment in this modality.

Retrieval performance can also obscure what the spectrum representation contributes.
Candidate-trained adapters retain substantial accuracy without query-specific spectral input, which reinforces the need for spectrum-blind controls.

Our improved baselines point to one possible alternative: rather than relying on compatible frozen representations, forward spectrum models provide paired supervision across millions of molecular structures.
End-to-end training largely eliminates the advantage of spectrum pretraining, while simulated spectra from $5.9$M molecules unlock fine-grained scoring and raise a fresh encoder from $0.232$ to $0.463$ MRR without candidate negatives.
These results identify molecular coverage as the central constraint in our setting.
Simulated paired data supplies this missing supervision at a scale that measured spectral libraries cannot currently provide.

\begin{ack}
  This work was partially supported by NSF award \#2406231 and the Artur Dubrawski Alumni Research Professorship Chair funds.
  This work used Bridges-2 at Pittsburgh Supercomputing Center \citep{brown_bridges2_2021} through allocation CHE260128 from the ACCESS program, which is supported by NSF grants \#2138259, \#2138286, \#2138307, \#2137603, and \#2138296.
\end{ack}

\bibliographystyle{rps-abbrvnat}
\bibliography{rps-bibliography}

\clearpage
\appendix

\section{Extended related work}
\label{app:related}

\textbf{Frozen-encoder alignment.}
CLIP trains both towers jointly \citep{radford_clip_2021}.
FuseMix aligns frozen unimodal encoders using latent-space augmentation \citep{vouitsis_fusemix_2024}; SAIL assesses alignment and achieves CLIP-competitive transfer with substantially less paired data \citep{zhang_sail_2025}.
MSAlign applies this approach to MS/MS \citep{krzakala_msalign_2026}.

\textbf{Convergence and its limits.}
\citet{huh_platonic_2024} report greater mutual-$k$NN agreement between more capable models.
At larger gallery sizes, \citet{koepke_platos_cave_2026} find that agreement falls from $0.135$ on $1{,}024$ samples to $0.008$ on LAION-15M ($k{=}10$); within-modality agreement remains much higher, and many-to-many correspondence further reduces cross-modal agreement.
\citet{tjandrasuwita_alignment_2025} analyze when alignment emerges.
\citet{schnaus_rosetta_2026} align unpaired embeddings by Wasserstein Procrustes, but obtain near-chance alignment between DreaMS and ECFP4 on MassSpecGym (FOSCTTM $0.47$; chance $0.5$).
Related modality-gap studies examine the relative placement of modalities in a shared space \citep{liang_mind_2022,li_closing_2025,yu_modality_2026,yu_anisotropic_2026}.

\textbf{Spectrum representations.}
Spec2Vec \citep{huber_spec2vec_2021} and MS2DeepScore \citep{huber_ms2deepscore_2021} learn spectral similarity.
MSBERT \citep{zhang_msbert_2024} and DreaMS \citep{bushuiev_self-supervised_2026} pretrain transformers on unlabeled spectra; \citet{gine_benchmarking_2026} compare featurization strategies.

\textbf{Molecule retrieval.}
Methods differ in how they compare a spectrum with candidates.
\emph{Library search} matches measured spectra using cosine, entropy \citep{li_spectral_2021}, or learned similarity \citep{huber_spec2vec_2021,huber_ms2deepscore_2021}; it requires a reference spectrum \citep{lam2011building}.
\emph{Spectrum-to-fingerprint} methods predict structural features: CSI:FingerID uses fragmentation trees and kernel SVMs \citep{duhrkop_csi_2015}, and MIST uses subformula-annotated peaks \citep{goldman_annotating_2023,heirman_reusability_2024}.
\emph{Structure-to-spectrum} methods simulate candidates for spectral matching: CFM-ID \citep{wang_cfm-id_2021,bremer2022predict}, GrAFF-MS \citep{murphy_efficiently_2023}, MassFormer \citep{young_tandem_2024}, ICEBERG \citep{goldman_generating_2024}, GLACIER \citep{wang_glacier_2026}, and FraGNNet \citep{young_fragnnet_2025}.
We use the last two for simulated pretraining.
\emph{Joint embedding} methods rank candidates in a shared space: JESTR combines a graph network and spectrum MLP with in-batch InfoNCE and a candidate-list penalty \citep{kalia_jestr_2025}; CSU-MS$^2$ uses contrastive spectrum--structure learning \citep{xie_csu-ms2_2025}; MSAlign adapts frozen encoders \citep{krzakala_msalign_2026}; and FLARE uses bidirectional peak--atom MaxSim with subformula-annotated peaks and known parent formula and adduct \citep{chen_flare_2026}.
De novo methods generate structures beyond a fixed candidate list \citep{bushuiev_massspecgym_2024,bohde_diffms_2025,wang_madgen_2025}.

\textbf{Predictive SSL.}
Equation~\eqref{eq:jepa} predicts latent targets, as in data2vec \citep{baevski_data2vec_2022}, using an EMA encoder \citep{he_momentum_2020}.
\citet{garrido_self-supervised_2023,ghaemi_seq-jepa_2026} develop invariant and equivariant representations.
Our non-identifiability result is related to, but distinct from, the impossibility of unsupervised disentanglement without inductive biases \citep{locatello_challenging_2019}.

\section{Methods}
\label{app:methods}

Table~\ref{tab:hparams} summarizes the three spectrum encoders.

\begin{table}[h]
  \centering\scriptsize
  \caption{%
    The three spectrum encoders.
    The DreaMS and MS-JEPA columns describe pretraining; the fresh encoder is trained end to end with the recipe of App.~\ref{app:eval}.
    The last two rows describe retrieval training (App.~\ref{app:methods:msmoljepa}, \ref{app:methods:msalign}).%
  }
  \label{tab:hparams}
  \setlength{\tabcolsep}{4pt}
  \begin{tabular}{lccc}
    \toprule
    & DreaMS (reproduced) & MS-JEPA & fresh encoder \\
    \midrule
    Blocks / width / heads / MLP & 7 / 1,024 / 8 / 4,096 & 8 / 768 / 12 / 3,072 & 8 / 768 / 12 / 3,072 \\
    Parameters & 95.5M & 64.1M encoder; 87.7M as (A) & 87.7M, as MS-JEPA (A) \\
    Peaks per spectrum & 60 (SSL), 100 (contrastive) & 100 & 100 \\
    $m/z$ encoding & fixed Fourier & fixed Fourier, $m/z$ and neutral loss & as MS-JEPA \\
    Attention bias & no own parameters, shared & learned, per head and block & as MS-JEPA \\
    Encoder conditioning & none & FiLM gains: adduct, charge, polarity & as MS-JEPA \\
    Representation & precursor token & $[\mathrm{CLS}]$ token, aggregated & as MS-JEPA (A) \\
    Objective & masked $m/z$, then contrastive & JEPA cosine to an EMA target & InfoNCE to MoLFormer \\
    Training data & GeMS-A10, ${\approx}24$M spectra & MSG train, 175,536 spectra & MSG train, 194,119 spectra \\
    Batch / optimizer steps & 4,096 spectra / 71,288 & 256 episodes / 10,240 & $4{\times}64$ spectra / 9,700 \\
    Optimizer & Adam $10^{-4}$, Noam & AdamW $2{\times}10^{-4}$, cosine & AdamW $10^{-4}$, cosine \\
    Precision / hardware & fp32, 8 H100 & bf16, 4 A100 & bf16, 4 A100 \\
    \midrule
    Adapter conditions & adduct, CE & none (A); CE, instrument (B) & none \\
    Trainable, frozen / trained & 8.42M / 104.0M & 7.35M / 95.0M (A) & -- / 95.0M \\
    \bottomrule
  \end{tabular}
\end{table}

\subsection{Notation}
\label{app:methods:notation}

Symbols follow \S\ref{sec:setup}.
Write $z_s=\bar u_\theta(f(s,a))$ and $z_m=\bar v_\theta(g(m))$ on $\mathbb S^{d-1}$, so $\rho_\theta(s,m)=\tau^{-1}\langle z_s,z_m\rangle$ as in \eqref{eq:score}.
$\Sigma_m$ is the support of $f(s,a)$ over measurements of $m$ under all conditions.

\subsection{The spectrum encoder of MS-JEPA}
\label{app:methods:encoder}

The input contains learned $[\mathrm{CLS}]$ and register tokens \citep{darcet_vision_2024}, the precursor (intensity $1.1$), and the $100$ most intense fragments, normalized to the base peak.
Each peak combines three branches: fixed Fourier features of $m/z$ and precursor-minus-fragment mass ($12{,}000$ features; periods $1$--$1{,}000$~Da and $0.1$--$999.9$~mDa), mapped by a shared mass MLP to $256$ dimensions each; and four intensity transforms (identity, square root, logarithm, and fraction of total fragment intensity), mapped to $128$ dimensions.
Layer normalization and a linear map fuse these branches into a $768$-dimensional token, with an added token-type embedding.

Eight pre-norm transformer blocks use $12$ heads, MLP width $3{,}072$, layer scale initialized at $0.01$ \citep{touvron2021going}, and stochastic depth up to $0.1$ \citep{huang2016deep}.
Each block learns a per-head attention bias from pairwise mass differences (Fourier features at $16$ periods from $1$~mDa to $1$~kDa), intensities, and token types.
By comparison, DreaMS's Graphormer-style bias \citep{ying_transformers_2021} uses differences of summed Fourier-branch features, shared across heads and blocks, with no additional parameters.

Adduct, charge, and polarity embeddings are projected to $128$ dimensions and modulate each block's attention and MLP residual branches through FiLM gains $1+\gamma$.
Zero initialization of the gain layer starts training with an unconditioned encoder; replacing the gains by the identity with probability $0.1$ permits use without conditions.
The trunk has $64.1$M parameters.

\subsection{MS-JEPA}
\label{app:methods:msjepa}

\textbf{Episodes.}
Spectra are grouped by molecule and condition (adduct, instrument family, collision energy).
Each episode samples a molecule with at least two condition groups, a target from one group, and context from the others: one spectrum, or three if at least four groups exist.
The training fold provides $16{,}294$ eligible molecules and $175{,}536$ spectra (adducts $[\mathrm{M{+}H}]^+$ and $[\mathrm{M{+}Na}]^+$); $5{,}510$ molecules support three contexts.

\textbf{Prediction.}
The encoder's $[\mathrm{CLS}]$ output is $z_{\mathrm{spec}}$.
A set aggregator ($22.0$M parameters) concatenates each context representation with its source condition, projects to $768$ dimensions, and applies three transformer blocks with a learned pooling token to produce $h_{\mathrm{mol}}$.
Every episode uses its first context ($K{=}1$); three-context episodes also use all three ($K{=}3$).
The condition encoder ($1.6$M parameters, mostly FiLM gains) embeds adduct, charge, polarity, instrument, and Fourier collision-energy features for the aggregator and predictor.
The predictor ($3.7$M parameters) maps $h_{\mathrm{mol}}$ through two residual MLP blocks of hidden width $1{,}024$, FiLM-modulated by the target condition.

The target is the EMA encoder's $z_{\mathrm{spec}}$ under the target spectrum's condition, with gradients stopped.
The loss averages the $1-\cos$ terms for $K{=}1$ over all episodes and $K{=}3$ over three-context episodes.
Target encoder, condition encoder, and prefix tokens follow their online counterparts by EMA, with cosine-scheduled momentum from $0.996$ to $1$.

\textbf{Optimization.}
We use AdamW ($\beta=(0.9,0.999)$), peak learning rate $2\times10^{-4}$, an $8$-epoch warm-up, and cosine decay to $10^{-6}$.
Layer-wise learning-rate decay is $0.9$, with a further $0.5\times$ factor on the peak encoder.
Weight decay rises from $0.04$ to $0.40$ on a cosine schedule; gradients are clipped at $3$.
Training uses four A100-80GB GPUs, $64$ episodes per GPU ($256$ per step), bf16, and seed $42$: $160$ epochs of $64$ steps ($10{,}240$ total), in under one hour.
Validation selects epoch $139$, step $8{,}960$, by $K{=}1$ prediction MRR among the true target and $63$ targets sampled uniformly from other validation molecules.

\textbf{Retrieval.}
We retain the online encoder, condition encoder, prefix tokens, and aggregator.
\textbf{(A)}~The encoder receives ion conditions; the aggregator processes one spectrum ($K{=}1$) with the full source condition (adduct, charge, polarity, instrument, Fourier collision-energy features, and validity bit).
The adapter receives $h_{\mathrm{mol}}$ and no conditions ($87.7$M encoder parameters).
\textbf{(B)}~The adapter receives $z_{\mathrm{spec}}$, collision energy, and instrument; the encoder sees only ion conditions ($65.7$M parameters).
Results use (A) unless stated otherwise.

\subsection{End-to-end and fresh encoders}
\label{app:methods:msmoljepa}

End-to-end models train the spectrum encoder and both adapters while keeping MoLFormer frozen.
Initialization is DreaMS contrastive, MS-JEPA (A) or (B), or a fresh MS-JEPA (A) encoder with identity FiLM gains.
The global vector ($h_{\mathrm{mol}}$, $z_{\mathrm{spec}}$, or DreaMS's precursor token), with the same adapter conditions as in the frozen model, feeds the spectrum adapter (App.~\ref{app:methods:msalign}).
No predictor or extra query/readout tokens are used.
DreaMS dropout ($0.1$), MS-JEPA stochastic depth, and FiLM condition dropout remain active; DreaMS Fourier frequencies remain fixed.
Encoder learning rates are $5\times10^{-6}$, $1.5\times10^{-5}$, or $5\times10^{-5}$ for pretrained models and $10^{-4}$ or $3\times10^{-4}$ for fresh models; adapters use $10^{-4}$.
Other settings follow App.~\ref{app:eval}.
Table~\ref{tab:results} uses validation-selected configurations (App.~\ref{app:baselines}).

\textbf{Fine-grained scoring.}
The spectrum side is the set $S(s)$ of the global vector and the encoder's last-layer precursor and fragment tokens (at most $100$), each with the adapter conditions appended and passed through the spectrum adapter.
The molecule side is the set $T(m)$ of MoLFormer's last-layer token states except BOS, EOS, PAD and MASK, each through the molecule adapter; the unknown-token state is kept, since it stands for an out-of-vocabulary atom.
With every vector $\ell_2$-normalized,
\begin{equation*}
  \rho_\theta(s,m)=\frac{1}{\tau\,|S(s)|}\sum_{u\in S(s)}\max_{v\in T(m)}\langle u,v\rangle,
\end{equation*}
the spectrum-to-molecule direction of MaxSim \citep{chen_flare_2026}.
With one vector a side it is the cosine score of \eqref{eq:score}.
Token states are computed by the frozen tower on the fly.

\subsection{MSAlign}
\label{app:methods:msalign}

Our MSAlign adapters \citep{krzakala_msalign_2026} each comprise a linear layer to $2{,}048$ dimensions, LayerNorm, GELU, dropout $0.2$, a linear layer to $1{,}024$, and $\ell_2$ normalization.
Scores are cosine similarities divided by a learned temperature initialized at $0.07$.
The spectrum input concatenates the encoder output ($1{,}024$ dimensions for DreaMS, $768$ for MS-JEPA) with $128$-dimensional embeddings of conditions absent from the encoder: adduct and collision energy for DreaMS; collision energy and instrument for MS-JEPA (B); none for (A).
Adduct and instrument use learned embeddings.
Collision energy uses sine and cosine at $16$ log-spaced frequencies ($0.01$--$10$~rad/eV), a validity bit, and a two-layer MLP.

MoLFormer-XL-both-10pct \citep{ross_large-scale_2022} and ChemBERTa-zinc-base-v1 \citep{chithrananda_chemberta_2020} are frozen at pinned snapshots in evaluation mode, with fixed random attention features for MoLFormer.
Both output $768$ dimensions.
Global vectors are MoLFormer's mean over attended tokens and ChemBERTa's $\langle s\rangle$ state.

\textbf{Training.}
We use in-batch or candidate InfoNCE (Eq.~\eqref{eq:cand}) and the recipe in App.~\ref{app:eval}; simulated pairs share identity when they share a store molecule (App.~\ref{app:sim}).
With frozen encoders, trainable parameters total $8.42$M for DreaMS ($4.73$M spectrum adapter, $3.68$M molecule adapter, $21$k condition embeddings, and temperature), $7.35$M for MS-JEPA (A), and $7.90$M for (B).

\subsection{DreaMS: deviations from the published recipe}
\label{app:methods:dreams}

We retrain both stages because the released model's contrastive stage uses MoNA, a source of MassSpecGym spectra.

\textbf{Self-supervised stage.}
We run the released checkpoint's configuration on GeMS-A10, changing hardware to eight H100-80GB GPUs (from A100-40GB), validation to $0.5\%$ of LC--MS runs (from $1\%$ of spectra), mask sampling to distribution-equivalent GPU Gumbel-top-$k$, the Noam step index by one, and checkpoint frequency to every $2{,}750$ steps.
We use step $71{,}288$.

\textbf{Contrastive stage.}
We apply the released MassSpecGym fine-tuning settings to the training fold: Adam at $3\times10^{-6}$ without scheduling, $32$ anchors per step, cosine-triplet margin $0.1$, positives sharing an InChIKey connectivity block, negatives within $0.05$~Da neutral mass, $100$ peaks, fp32, seed $3407$, and an unfrozen backbone throughout.
Anchors are restricted to $[\mathrm{M{+}H}]^+$ to prevent adduct-based separation of mass-window negatives.
We train for $25{,}000$ steps and use step $24{,}000$.

\textbf{Retrieval.}
Each checkpoint retains its training peak budget ($60$ SSL, $100$ contrastive), padded to that length because the released attention does not mask padded keys.
The representation is the final precursor token.

\section{Proofs}
\label{app:proof}

\begin{proof}[Proof of Proposition~\ref{prop:relabel}]
  Write $L(f)=\inf_h L(f,h)$, where
  \[
    L(f,h)
    =\E\,\ell\!\left(
    \left\langle
    \overline{h(X_C,A_C,a_t)},\overline{f(s_t,a_t)}
    \right\rangle\right),
    \qquad
    X_C=\{f(s_i,a_i)\}_{i\in C},
  \]
  and the context and target spectra come from the same molecule.

  Let $f'=R\circ f$ and $\Sigma'_m=Q_m\Sigma_m$.
  Since the transformed supports are pairwise disjoint, any nonempty same-molecule context uniquely determines $m$.
  For a valid transformed context $x_C\subset\Sigma'_m$, define
  \[
    h'(x_C,A_C,a_t)
    =
    Q_m h(Q_m^{-1}x_C,A_C,a_t),
  \]
  where $Q_m^{-1}x_C=\{Q_m^{-1}x_i:x_i\in x_C\}$.
  Set $h'$ to a fixed nonzero vector on other contexts.
  Since the molecule family is finite or countable and the supports are Borel, this defines a measurable predictor.

  For a context--target sample from molecule $m$,
  \[
    \overline{h'(X'_C,A_C,a_t)}
    =Q_m\overline{h(X_C,A_C,a_t)},
    \qquad
    \overline{f'(s_t,a_t)}
    =Q_m\overline{f(s_t,a_t)}.
  \]
  Orthogonality of $Q_m$ leaves the cosine unchanged, so $L(f',h')=L(f,h)$.
  Hence $L(f')\leq L(f)$.
  Applying the same construction to the inverse of $R$ on $\bigcup_m\Sigma'_m$, which acts as $Q_m^{-1}$ on each $\Sigma'_m$, gives the reverse inequality and therefore
  \[
    L(R\circ f)=L(f).
  \]

  For the second claim, let $\{\bar g(m)\}\subset\mathbb S^{d-1}$, $d\geq2$, be distinct, suppose that $m(s,a)$ is measurable, and let each $\Cand(m)$ contain $m$ and at least one other molecule.
  Set
  \[
    f(s,a)=\bar g(m(s,a)).
  \]
  Then $\Sigma_m=\{\bar g(m)\}$, and the score
  \[
    \rho_f(s,m')=\langle f(s,a),\bar g(m')\rangle
  \]
  is uniquely maximized at $m'=m(s,a)$.

  For each $m$, choose $\sigma(m)\in\Cand(m)\setminus\{m\}$ and let
  \[
    \delta_m
    =
    \angle\!\big(\bar g(m),\bar g(\sigma(m))\big)>0.
  \]
  Because $d\geq2$ and the molecule family is at most countable, we may choose pairwise distinct unit vectors $y_m$ such that
  \[
    \angle\!\big(y_m,\bar g(\sigma(m))\big)<\delta_m/3.
  \]
  Choose an orthogonal $Q_m$ satisfying $Q_m\bar g(m)=y_m$, and define $R|_{\Sigma_m}=Q_m$.
  The transformed supports $\{y_m\}$ are pairwise disjoint.
  Moreover, by the reverse triangle inequality,
  \[
    \angle\!\big(y_m,\bar g(m)\big)
    \geq
    \delta_m-\angle\!\big(y_m,\bar g(\sigma(m))\big)
    >
    2\delta_m/3
    >
    \angle\!\big(y_m,\bar g(\sigma(m))\big).
  \]
  Thus, under $f'=R\circ f$, the candidate $\sigma(m)$ has strictly larger cosine similarity than the true molecule $m$.
  Hence direct cosine retrieval with $f'$ never ranks the true molecule first, whereas retrieval with $f$ always does, despite $L(f')=L(f)$.
\end{proof}

\begin{proposition}[Raw kernel agreement contracts each support]
  \label{prop:raw_collapse}
  Let $f(s,a)\in\mathbb S^{d_s-1}$ and $g(m)\in\mathbb S^{d_m-1}$, and let $\varepsilon\ge0$.
  Suppose
  \[
    \left|
    \langle x,y\rangle-\langle g(m),g(n)\rangle
    \right|
    \le\varepsilon
  \]
  for all $m,n$ and all $x\in\Sigma_m$, $y\in\Sigma_n$.
  Then $\diam(\Sigma_m)\le\sqrt{2\varepsilon}$ for every $m$.
  In particular, if $\varepsilon=0$, every nonempty $\Sigma_m$ is a singleton.
\end{proposition}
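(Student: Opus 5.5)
The plan is to compare two points of the same support against each other, using the hypothesis with $m=n$. Fix $m$ and take any $x,y\in\Sigma_m$. Applying the assumed bound with $n=m$ to the pair $(x,x)$ and to the pair $(x,y)$ gives
\[
  |\langle x,x\rangle-\langle g(m),g(m)\rangle|\le\varepsilon,
  \qquad
  |\langle x,y\rangle-\langle g(m),g(m)\rangle|\le\varepsilon .
\]
Since both features are unit-normalized, $\langle x,x\rangle=\langle g(m),g(m)\rangle=1$, so the first inequality holds trivially. The second inequality becomes $\langle x,y\rangle\ge 1-\varepsilon$.

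Next I would expand the squared distance on the sphere:
\[
  \lVert x-y\rVert_2^2=\lVert x\rVert^2+\lVert y\rVert^2-2\langle x,y\rangle=2-2\langle x,y\rangle .
\]
Substituting the bound from the first step gives $\lVert x-y\rVert_2^2\le 2\varepsilon$, hence $\lVert x-y\rVert_2\le\sqrt{2\varepsilon}$. The pair $x,y\in\Sigma_m$ was arbitrary, so taking the supremum gives $\diam(\Sigma_m)\le\sqrt{2\varepsilon}$.

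For $\varepsilon=0$ the diameter is zero, so every nonempty $\Sigma_m$ is a single point. If $\Sigma_m$ is read as a topological support (a closure), the supremum bound still applies to the closure, because the bound $\lVert x-y\rVert\le\sqrt{2\varepsilon}$ passes to limits.

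I do not expect a genuine obstacle. The only care needed is to invoke the hypothesis with $m=n$ and with two different points $x\neq y$ of the same support, and to make sure "support" is interpreted so that the hypothesis applies to all of its points.
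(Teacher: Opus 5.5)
Your proposal is correct and matches the paper's proof: take $n=m$ in the hypothesis to get $\langle x,y\rangle\ge 1-\varepsilon$ for $x,y\in\Sigma_m$, then use $\lVert x-y\rVert_2^2=2-2\langle x,y\rangle\le 2\varepsilon$ on the unit sphere and take the supremum. Your remarks on the trivial $(x,x)$ case and on closures are harmless extras the paper does not need, since its hypothesis already ranges over all points of $\Sigma_m$.
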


\begin{proof}
  Fix $m$ and $x,y\in\Sigma_m$.
  Taking $n=m$ and using $\langle g(m),g(m)\rangle=1$ gives $\langle x,y\rangle\ge1-\varepsilon$.
  Because $x$ and $y$ are unit normalized,
  \[
    \|x-y\|_2^2
    =2-2\langle x,y\rangle
    \le2\varepsilon.
  \]
  Taking the supremum over $x,y\in\Sigma_m$ proves the claim.
\end{proof}

\section{Evaluation protocol}
\label{app:eval}

\textbf{Data.}
We use MassSpecGym v1.5 \citep{bushuiev_massspecgym_2024,liu_massspecgym_2026} and its official MCES split: $194{,}119$ / $19{,}429$ / $17{,}556$ train / validation / test spectra from $25{,}046$ / $3{,}386$ / $3{,}170$ query molecules ($2{,}998$ test InChIKey-14 blocks).
Each query uses its official mass-based list, including the true molecule ($12$--$256$ candidates; median $256$).
Formula-based lists are used only for the leakage audit (App.~\ref{app:sim}).
Every tower reads RDKit canonical isomeric SMILES, which also identify the true candidate \citep{liu_massspecgym_2026}.
We drop $66$ invalid candidate entries; canonicalization produces no duplicates within lists.

\textbf{Metrics.}
The rank of the true molecule is $1$ plus the number of candidates scored strictly higher (fp32 scores), so ties favor the true molecule; no run has ties on more than $4$ of the $17{,}556$ test spectra.
Mean reciprocal rank
\[
  \mathrm{MRR}
  =
  \frac{1}{N}\sum_{i=1}^{N}\frac{1}{\rank_i}
\]
and Recall@$\{1,5,20\}$ are averaged over test spectra.
Uniform random ranking gives $\E[1/\rank]=H_K/K$ for a list of $K$, MRR $0.024$ on the test fold.

\textbf{Training recipe.}
Except JESTR (App.~\ref{app:candtrained}), all retrieval models share this recipe.
Frozen encoders remain in evaluation mode with learning rate $0$.
Each epoch samples one spectrum per training query molecule in random order.
Four A100-80GB GPUs each process $64$ spectra per batch: $97$ steps per epoch, dropping $214$ molecules, for $100$ epochs ($9{,}700$ steps).

Augmentation independently drops peaks with probability $0.2$ (retaining at least one), multiplies intensities by $e^{0.3\varepsilon}$, and multiplies $m/z$ by $1+5\times10^{-6}\varepsilon$, with $\varepsilon\sim\mathcal N(0,1)$; the precursor is unchanged.
Spectra are then restricted to the encoder's peak budget and normalized to the base peak: $100$ peaks by default, $60$ for DreaMS SSL, and $128$ for the MS-JEPA ablations in App.~\ref{app:msjepa}.

AdamW ($\beta=(0.9,0.999)$) uses learning rate $10^{-4}$ for adapters and condition embeddings and the rates in App.~\ref{app:methods:msmoljepa} for encoders.
Weight decay is $0.02$ on parameters with at least two dimensions.
We use $5\%$ linear warm-up, cosine decay to zero, and gradient clipping at $1$.
Temperature is learned as $\log\tau$, initialized at $\tau=0.07$, with learning rate $10^{-2}$, no weight decay, and $\tau\ge0.01$ enforced after each step.
Forward passes use bf16 autocast; scores use fp32.
Validation on official lists runs every $200$ steps ($485$, or five epochs, for MaxSim) and at the final step; the highest MRR selects the checkpoint.
App.~\ref{app:baselines} lists deviations.

\textbf{Statistics.}
Models use one training seed ($42$), except the JESTR repeats.
Checkpoints and configurations are selected by validation MRR, then evaluated once on test; JESTR uses the last checkpoint.
We report $95\%$ percentile intervals from $1{,}000$ bootstrap samples of the $2{,}998$ test InChIKey-14 blocks, recomputing metrics over their spectra.
Model differences use paired samples.
These intervals capture test-sample uncertainty, not training variance.

\section{Geometry probes}
\label{app:probes}

Within-modality probes use the $17{,}556$ test spectra of $2{,}998$ molecules (InChIKey-14); cross-modal probes use the $2{,}994$ of them that every representation covers.

\textbf{Same-molecule silhouette.}
For each spectrum, let $a$ be its mean cosine distance to other spectra of the same molecule and $b$ its smallest mean distance to another molecule.
We average $(b-a)/\max(a,b)$ \citep{rousseeuw1987silhouettes} over spectra with at least two replicates per molecule.

\textbf{$k$NN purity.}
Purity is the same-molecule fraction among the $k$ nearest other spectra.
For a molecule with $n$ spectra, its ceiling is $\min(k,n-1)/k$ (fold mean $0.676$ at $k{=}10$).
We normalize by the per-spectrum ceiling for comparison across $k$ and corpora.
Chance is $0.0021$.

\textbf{Scatter summaries.}
$\tr S_W$ is the mean squared distance to each molecule's centroid; $\tr S_B$ is the size-weighted mean squared distance of centroids to the global mean.
Their ratio is the Fisher ratio; multiplying by $(N-C)/(C-1)$ gives Calinski--Harabasz.
Davies--Bouldin averages each molecule's largest ratio of summed within-molecule spread to between-centroid distance.
Alignment is the mean squared distance between same-molecule pairs; uniformity is the log mean Gaussian potential over all pairs.

\textbf{Cross-modal agreement.}
Mutual-$k$NN \citep{huh_platonic_2024} measures overlap between a molecule's $k$ nearest neighbors in two representations.
Spectrum embeddings are averaged and normalized per molecule.
For $n$ molecules, chance overlap is $k/(n-1)$ when the query is excluded.
Linear CKA \citep{kornblith_similarity_2019} is $\lVert Y^\top X\rVert_F^2/(\lVert X^\top X\rVert_F\lVert Y^\top Y\rVert_F)$ for column-centered matrices.
Within-modality references help distinguish poor agreement from scale effects in the estimator \citep{koepke_platos_cave_2026}.
The two DreaMS stages agree at $0.372$ but share architecture and most weights; App.~\ref{app:molenc} supplies independent molecule-encoder references.

\begin{table}[h]
  \centering\footnotesize
  \caption{%
    Frozen-geometry panel, MassSpecGym test fold.
    MS-JEPA is the aggregated representation; MS-JEPA [CLS] is the encoder's [CLS] token, which sees only the ion condition.
    Arrows give the preferred direction.
    Frozen MRR: in-batch MoLFormer adapter (App.~\ref{app:eval}); the adapter receives adduct and collision energy for DreaMS, no conditions for MS-JEPA (its encoder reads them), and collision energy and instrument for [CLS].%
  }
  \label{tab:geometry-full}
  \setlength{\tabcolsep}{3pt}
  \begin{tabular}{lcccc}
    \toprule
    Metric & DreaMS SSL & DreaMS contr. & MS-JEPA & MS-JEPA [CLS] \\
    \midrule
    \multicolumn{5}{l}{\emph{Within-modality (partition)}}\\
    Same-molecule silhouette ($\uparrow$)      & +0.141 & +0.282 & \textbf{+0.490} & +0.381 \\
    $k$NN purity@1 ($\uparrow$)                & 0.878 & 0.922 & \textbf{0.971} & 0.965 \\
    $k$NN purity@10, normalized ($\uparrow$)   & 0.683 & 0.782 & \textbf{0.905} & 0.881 \\
    Fisher trace ratio ($\uparrow$)            & 1.85 & 2.12 & \textbf{6.45} & 3.56 \\
    Calinski--Harabasz ($\uparrow$)            & 10.3 & 11.9 & \textbf{36.1} & 20.0 \\
    Davies--Bouldin ($\downarrow$)             & 1.463 & 1.346 & \textbf{1.140} & 1.221 \\
    Alignment ($\downarrow$)                   & 0.510 & 0.448 & \textbf{0.142} & 0.277 \\
    Uniformity                                 & $-3.293$ & $-3.386$ & $-2.729$ & $-3.169$ \\
    \midrule
    \multicolumn{5}{l}{\emph{Cross-modal (metric)}}\\
    cos--Tanimoto (mass null $0.370$) & 0.626 & \textbf{0.661} & 0.548 & -- \\
    Mutual-$k$NN $\to$ MoLFormer, $k{=}10$ ($\uparrow$) & \textbf{0.091} & 0.090 & 0.033 & 0.037 \\
    Mutual-$k$NN $\to$ ChemBERTa, $k{=}10$ ($\uparrow$) & \textbf{0.061} & 0.060 & 0.024 & 0.027 \\
    Linear CKA $\to$ MoLFormer                 & 0.265 & 0.209 & 0.260 & 0.241 \\
    Linear CKA $\to$ ChemBERTa                 & 0.243 & 0.170 & 0.280 & 0.253 \\
    \midrule
    Frozen retrieval MRR                      & \textbf{0.227} & \textbf{0.227} & 0.121 & 0.170 \\
    \bottomrule
  \end{tabular}
\end{table}

\section{Cosine--Tanimoto correlation}
\label{app:correlation}

\begin{figure}[htbp]
  \centering
  \includegraphics[width=\linewidth]{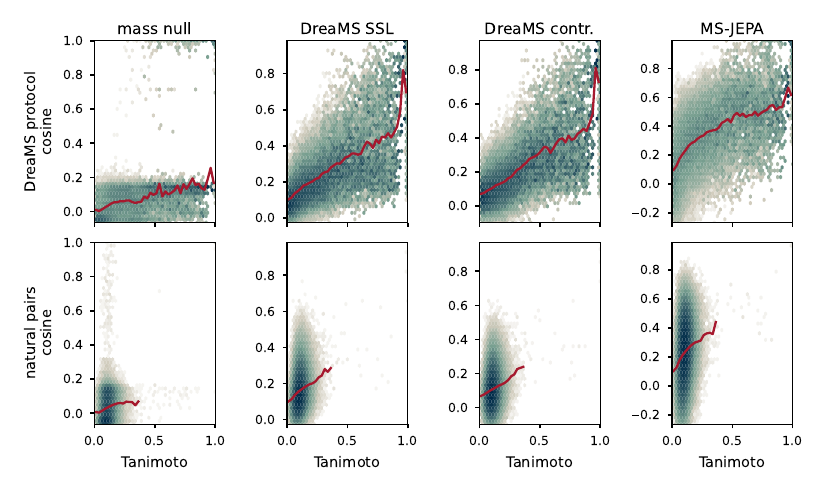}
  \caption{%
    \textbf{Cosine against Morgan Tanimoto} for four representations of the same $3{,}170$ test molecules (one random spectrum each), as a pair density with the mean cosine per Tanimoto bin (line).
    Top: DreaMS's protocol, $40$ Tanimoto bins with $2{,}500$ pairs each drawn with replacement; Pearson $0.370$ (mass null), $0.626$ (DreaMS SSL), $0.661$ (DreaMS contrastive), $0.548$ (MS-JEPA).
    Bottom: $200{,}000$ uniformly drawn molecule pairs; Pearson $0.189$, $0.228$, $0.185$, $0.200$ in the same order.%
  }
  \label{fig:cos_tanimoto}
\end{figure}

\textbf{Mass null.}
DreaMS's fixed precursor Fourier features define a parameter-free $11{,}994$-dimensional representation of precursor $m/z$ alone.
Its cosine similarity closely tracks log mass difference ($r=-0.916$).
On the same molecule pairs as the learned encoders, its cosine--Tanimoto correlation is $0.370$ under the DreaMS protocol ($56\%$ of the contrastive encoder's $0.661$) and $0.189$ under uniform sampling, where the encoders reach $0.185$--$0.228$.

\textbf{Sampling.}
The DreaMS protocol uses Morgan fingerprints (radius $2$, $2{,}048$ bits), one random spectrum per SMILES, and $2{,}500$ pairs sampled with replacement from each of $40$ Tanimoto bins (width $0.025$).
This raises mean Tanimoto from $0.108$ to $0.501$ and all correlations, including the mass null, by $2.0$--$3.6\times$.
On this test fold, $28$ bins contain fewer unique pairs than the quota, requiring repeated pairs at high similarity.
Uniformly sampled pairs concentrate below Tanimoto $0.3$ (Fig.~\ref{fig:cos_tanimoto}).

\section{Extended comparison between encoders}
\label{app:molenc}

Figure~\ref{fig:kxk} scores every pair among five molecule and four spectrum representations: a Morgan fingerprint (radius 2, $4{,}096$ bits), MoLFormer \citep{ross_large-scale_2022}, ChemBERTa \citep{chithrananda_chemberta_2020}, SMI-TED \citep{soares_smited_2024} ($289$M parameters) and MiniMol \citep{klaser_minimol_2024} ($10$M, graph-based) on the molecule side; DreaMS SSL, DreaMS contrastive, MS-JEPA and spectral entropy similarity \citep{li_spectral_2021} on the spectrum side.
Spectrum encoders enter as one normalized centroid per molecule.
The entropy kernel has no embedding, so its molecule-level entry is the most similar pair of spectra between two molecules, and its CKA uses that similarity matrix directly.

\begin{figure}[htbp]
  \centering
  \includegraphics[width=0.92\linewidth]{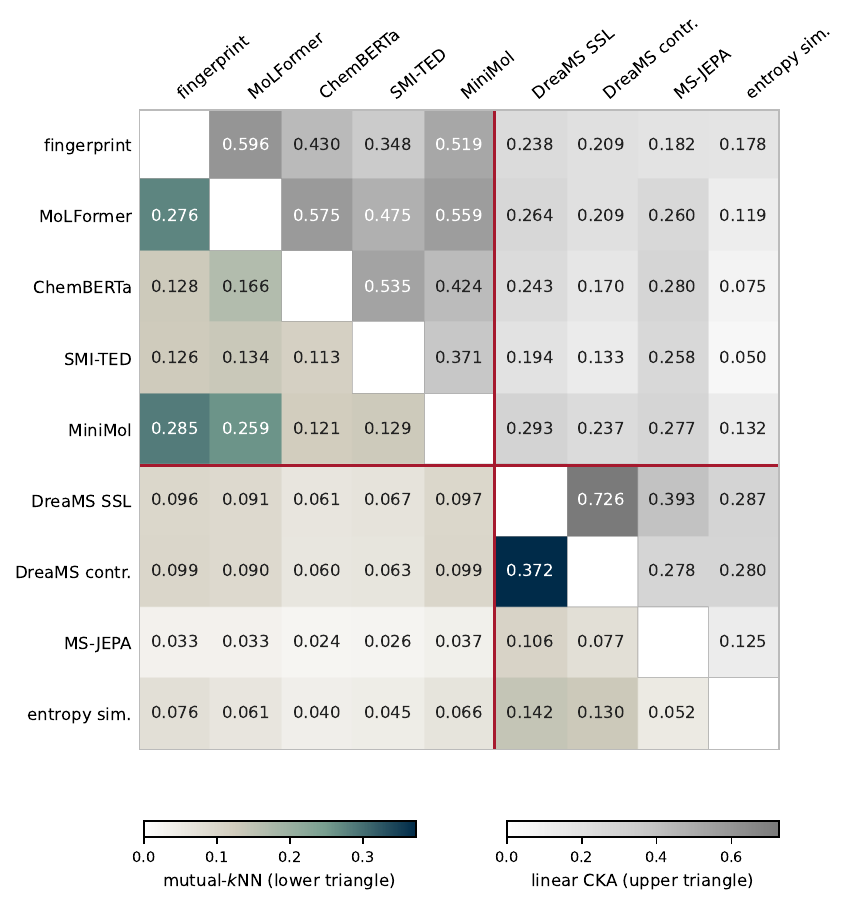}
  \caption{%
    \textbf{Representational alignment across nine representations} on the $2{,}994$ MassSpecGym test molecules covered by all of them.
    Lower triangle: mutual-$k$NN at $k{=}10$ (chance $0.003$); upper triangle: linear CKA.
    Crimson lines separate molecule (top left) from spectrum (bottom right) representations; each triangle is colored on its own scale.%
  }
  \label{fig:kxk}
\end{figure}

Cross-modal mutual-$k$NN is at most $0.099$ across all twenty molecule--spectrum pairs, compared with $0.11$--$0.29$ among molecule representations and $0.37$ between DreaMS stages.
Fingerprints and MiniMol align best with DreaMS ($0.096$--$0.099$), followed by MoLFormer ($0.090$--$0.091$), SMI-TED ($0.063$--$0.067$), and ChemBERTa ($0.060$--$0.061$).
Fingerprints and MiniMol also agree most strongly with each other ($0.285$).

Without learning, entropy similarity reaches $0.076$ against fingerprints: $79\%$ of DreaMS SSL's $0.096$, and above its agreement with ChemBERTa ($0.040$) or SMI-TED ($0.045$).
MS-JEPA agrees least with every molecule representation ($0.024$--$0.037$) and with entropy similarity ($0.052$, versus DreaMS's $0.130$--$0.142$).
CKA gives a different ordering: ChemBERTa--SMI-TED scores $0.535$ despite mutual-$k$NN of $0.113$, and MS-JEPA exceeds DreaMS contrastive against four of five molecule representations.

\section{The molecule-tower term}
\label{app:msjepa}

We adapt VA-VAE's foundation-alignment loss \citep{yao_vavae_2025}: cosine registration to MoLFormer through a learned linear projection (\emph{mcos}), plus matching of within-batch pairwise similarities (\emph{mdms}).
Total weight is $1$, mdms weight $0.25$, and margins $0$.
Each intervention has a matched control.
Encoders are then frozen and evaluated with the in-batch MoLFormer adapter; test-molecule differences are paired.

\textbf{MS-JEPA.}
The term acts on $h_{\mathrm{mol}}$.
Control, mcos, and mcos+mdms use one GPU, batch $64$, $160$ epochs ($40{,}800$ steps), and final checkpoints.
The control's frozen MRR is $0.105$, below the main encoder's $0.121$.
MRR rises $0.105\to0.150\to0.192$ and R@1 $0.038\to0.065\to0.091$: mcos adds $0.045$ [$0.026$, $0.069$], and mdms adds $0.042$ [$0.025$, $0.060$].
Mutual-$k$NN rises $0.029\to0.052\to0.064$; silhouette changes $0.487\to0.290\to0.385$.

\textbf{DreaMS.}
The term acts on the contrastive stage's pre-head precursor token through a $1{,}024\to768$ projection.
Frozen retrieval changes little: MRR $0.227\to0.230$ (difference $+0.003$ [$-0.012$, $0.017$]), R@1 $0.124\to0.119$, and mutual-$k$NN $0.088\to0.092$.
Same-molecule alignment changes $0.437\to0.044$, CKA $0.203\to0.455$, and silhouette $0.298\to0.157$.

\section{Candidate-aware objectives}
\label{app:leak}

The two objectives differ only in the negatives.
With a per-GPU batch $\{(s_i,m_i)\}_{i=1}^n$, $n=64$, and $N_i=\{i\}\cup\{j:m_j\neq m_i\}$ (molecules compared by the first InChIKey block),
\begin{equation}
  \mathcal{L}_{\mathrm{batch}}=-\frac1n\sum_{i}\log
  \frac{\exp\rho_\theta(s_i,m_i)}{\sum_{j\in N_i}\exp\rho_\theta(s_i,m_j)},
  \quad
  \mathcal{L}_{\mathrm{cand}}=-\E_{(s,m^\star)}\Big[
  \log \tfrac{\exp \rho_\theta(s,m^\star)}{\sum_{m'\in\Cand_{128}(s)}\exp \rho_\theta(s,m')}\Big],
  \label{eq:cand}
\end{equation}
where $\Cand_{128}(s)$ is $m^\star$ and $127$ molecules drawn uniformly without replacement from the query's official list $\Cand(s)$ (at most $256$), redrawn every step; a list with fewer than $128$ entries ($0.8\%$ of training queries) is used whole.

\textbf{Candidate construction.}
Mass-based lists are conditioned on a window around the true molecule's mass, using the same procedure in training and evaluation.
The concern here is the transferable difference between library structures and decoys.

\textbf{The optimum.}
With $K$ negatives drawn i.i.d.\ from a proposal $r$ and the positive from $p(\cdot\mid s)$, the population minimizer of the InfoNCE risk is the log density ratio \citep{oord_cpc_2018,poole_variational_2019}
\begin{equation}
  \rho^\star_r(s,m) = \log \frac{p(m\mid s)}{r(m)} + c(s)
  = \underbrace{\log\frac{p(m)}{r(m)}}_{A_r(m)} \;+\;
  \underbrace{\log\frac{p(s\mid m)}{p(s)}}_{\mathrm{PMI}(s;m)} \;+\; c(s).
  \label{eq:opt2}
\end{equation}
For independent in-batch negatives $r=p$, the prior term vanishes.
For an independent candidate proposal $r=q\ne p$, it remains.
This idealization explains the incentive to learn a library--decoy prior.
Actual benchmark negatives are query-dependent and sampled without replacement.
Property-matched decoys \citep{gupta_confronting_2026} reduce differences on the matched properties.

\textbf{Blind controls.}
The primary control keeps the trained adapters and ranks each official list by $\bar\rho(m)=\E_s[\rho_\theta(s,m)]$ over the $194{,}119$ training spectra, each with its own conditions.
It uses neither query spectra nor query conditions.
For global scores, $\bar\rho(m)=\tau^{-1}\langle\E_s[z_s],z_m\rangle$ exactly; this differs from adapting the mean encoder output because the adapter is nonlinear.
For MaxSim, we estimate the expectation with $16{,}384$ training spectra sampled without replacement.

The secondary control (global scores only) replaces the encoder output by its training mean but retains query adduct and collision energy at the adapter.
We report $\Delta_0=\mathrm{MRR}_{\mathrm{control}}-0.024$ and $\Delta_s=\mathrm{MRR}_{\mathrm{full}}-\mathrm{MRR}_{\mathrm{control}}$.

\begin{table}[h]
  \centering\footnotesize
  \caption{%
    Spectrum-blind controls, frozen DreaMS contrastive, MassSpecGym test fold.
    Expected score: primary control (Table~\ref{tab:results}, left); mean embedding: secondary control, conditions kept.%
  }
  \label{tab:blind}
  \setlength{\tabcolsep}{4pt}
  \begin{tabular}{llccccccc}
    \toprule
    & & & \multicolumn{3}{c}{expected score} & \multicolumn{3}{c}{mean embedding} \\
    \cmidrule(lr){4-6}\cmidrule(lr){7-9}
    Tower & Negatives & MRR & blind & $\Delta_0$ & $\Delta_s$ & blind & $\Delta_0$ & $\Delta_s$ \\
    \midrule
    ChemBERTa & in-batch  & 0.147 & 0.020 & $-0.004$ & 0.127 & 0.039 & $+0.015$ & 0.108 \\
    ChemBERTa & candidate & 0.211 & 0.147 & $+0.122$ & 0.064 & 0.155 & $+0.131$ & 0.056 \\
    MoLFormer & in-batch  & 0.227 & 0.018 & $-0.006$ & 0.209 & 0.034 & $+0.009$ & 0.193 \\
    MoLFormer & candidate & 0.303 & 0.169 & $+0.144$ & 0.135 & 0.160 & $+0.136$ & 0.143 \\
    \bottomrule
  \end{tabular}
\end{table}

At fixed towers (Table~\ref{tab:blind}), candidate training raises expected-score blind MRR by $0.151$ [$0.125$, $0.178$] for MoLFormer and $0.126$ [$0.105$, $0.148$] for ChemBERTa, versus full-MRR gains of $0.077$ [$0.059$, $0.095$] and $0.064$ [$0.049$, $0.078$].
Thus $\Delta_s$ falls by $0.074$ [$0.050$, $0.097$] and $0.063$ [$0.042$, $0.082$].
The blind gain accounts for $0.52$ [$0.45$, $0.59$] and $0.66$ [$0.56$, $0.75$] of full MRR above random.

The mean-embedding control gives similar candidate-training gains ($0.127$ [$0.107$, $0.148$] and $0.115$ [$0.094$, $0.136$]).
Its in-batch scores are slightly above chance ($\Delta_0=0.009$ [$0.007$, $0.012$] and $0.015$ [$0.011$, $0.020$]), whereas expected-score controls are not ($-0.006$ [$-0.010$, $0.000$] and $-0.004$ [$-0.007$, $0.001$]).

\subsection{Other candidate-trained models}
\label{app:candtrained}

JESTR \citep{kalia_jestr_2025} trains a spectrum MLP and a molecular-graph GCN from scratch with in-batch InfoNCE in both directions, $\mathcal{L}_{\mathrm{NCE}}$, on cosine scores.
For the last $3\%$ of its epochs it adds a penalty on $L$ entries of each training query's mass candidate list, sorted by Tanimoto similarity to the query and visited in turn; JESTR-NR omits it.
With unit spectrum embeddings $\bar s_i$ of a batch of $n$ queries and unit embeddings $\bar z_{ij}$ of their listed decoys,
\begin{equation}
  \mathcal{L}=(1-w)\,\mathcal{L}_{\mathrm{NCE}}
  +\frac{w}{nL}\sum_{i=1}^{n}\sum_{j=1}^{L}\langle\bar s_i,\bar z_{ij}\rangle,
  \qquad w=0.1 .
  \label{eq:jestr}
\end{equation}
We train both on our split with the official mass lists and the evaluation of App.~\ref{app:eval}.
The blind control is the expected score, $\langle\mu,\bar z_m\rangle$ with $\mu=\E_s[\bar s]$ over the training spectra (Table~\ref{tab:jestr}).

\textbf{Published recipe.}
We use the public architecture and hyperparameters: spectrum dropout $0.4$; GCN with BatchNorm, dropout $0.2$, and max pooling; $\tau=0.05$; Adam at $5\times10^{-4}$; batch $32$; and $L=32$ decoys in a separate train-mode forward pass.
Our deviations are all-adduct training for $1{,}000$ epochs, one random spectrum per query per epoch, masking same-molecule negatives, removing the query from its candidate list, and using the last checkpoint.
The public code restricts adducts to $[\mathrm{M}{+}\mathrm{H}]^+$ and stops early on training loss.
Both arms share the first $970$ epochs.

The penalty changes full MRR by $-0.012$ [$-0.017$, $-0.007$] and blind MRR by $-0.033$ [$-0.047$, $-0.020$].
Nine repeats of the last $30$ epochs from a second run give $-0.004\pm0.001$ full and $+0.005\pm0.002$ blind MRR (mean $\pm$ s.e.); three further runs give blind changes of $+0.011$, $-0.007$, and $-0.004$.
The effect is small and inconsistent.
Without the penalty, JESTR-NR has blind MRR $0.129$ ($\Delta_0=0.105$), but $94\%$ of this gain comes from $153$ molecules with at least $20$ spectra ($39\%$ of test spectra).
Molecule-averaged blind MRR is $0.043$ (random $0.024$).
Mean normalized spectrum embeddings have norms $0.14$ and $0.13$; using mean unnormalized embeddings instead gives blind MRR $0.037$ and $0.036$.

\textbf{Simplified recipe.}
We retain Eq.~\eqref{eq:jestr} but use a $1000$--$512$--$512$--$512$ spectrum MLP and a three-layer, $512$-wide GCN with mean pooling, neither with dropout nor BatchNorm.
Training uses unmasked InfoNCE, $\tau=0.07$, AdamW at $10^{-3}$, weight decay $0.01$, and $60$ epochs over all spectra.
The penalty acts in the last two epochs on $L=5$ of the $20$ candidates most Tanimoto-similar to the query.
JESTR-NR is at chance under the blind control.
The penalty improves blind MRR in all five seeds, adding $0.056\pm0.016$ blind MRR ($0.024\pm0.003$ per molecule) and $0.013\pm0.002$ full MRR.

At penalty onset, its gradient-norm ratio to $(1-w)\mathcal L_{\mathrm{NCE}}$ is $0.0059$ in the published recipe versus $0.073$ in the simplified recipe.
This measures relative gradient strength before Adam's adaptive scaling.
Raising $w$ to $0.58$ restores the ratio; encoding decoys with positives in the same BatchNorm batch then gives $+0.068$ blind and $+0.020$ full MRR in three of three seeds.
Separate batches allow BatchNorm statistics to distinguish positives from decoys and cause collapse in two of three seeds.

\textbf{Why the penalty moves the blind scorer.}
Write $\bar s_i=\mu+\varepsilon_i$.
The penalty splits into the listed decoys' blind scores and a spectrum-specific remainder,
\begin{equation*}
  \frac{w}{nL}\sum_{i,j}\langle\bar s_i,\bar z_{ij}\rangle
  =\frac{w}{nL}\sum_{i,j}\langle\mu,\bar z_{ij}\rangle
  +\frac{w}{nL}\sum_{i,j}\langle\varepsilon_i,\bar z_{ij}\rangle ,
\end{equation*}
so the negative gradient with respect to a decoy's direction is proportional to $-(I-\bar z_{ij}\bar z_{ij}^{\top})(\mu+\varepsilon_i)$: it points away from the listing spectrum, with a shared component along $-\mu$.
Candidate InfoNCE has the same directional contribution, weighted by $p_\theta(m'\mid s)/\tau$ instead of $w/(nL)$.
Under the independent-proposal idealization, its expected score equals $A_q(m')+\E_s[\mathrm{PMI}(s;m')]$ up to a constant (Eq.~\eqref{eq:opt2}); the blind control therefore need not isolate $A_q$ alone.

A test-set blind gain requires the molecule encoder to transfer this effect more strongly to decoys than to positives.
In units of $\langle\mu,\bar z\rangle/\lVert\mu\rVert$, the simplified GCN lowers positive and decoy scores by $0.089$ and $0.168$ on training queries, and by $0.086$ and $0.165$ on test queries (gaps $0.079$ and $0.080$).
The published GCN in the otherwise simplified recipe produces gaps of $0.024$ on training and $0.001$ on test, indicating little transfer of this separation.

\begin{table}[h]
  \centering\footnotesize
  \caption{%
    Effect of JESTR's candidate-list penalty, MassSpecGym test fold.
    Change: JESTR minus JESTR-NR, averaged over paired runs (the published runs share their first $970$ epochs).
    Blind: expected-score control (random $0.024$).
    Reweighted: published recipe with $w=0.58$ and the decoys in the positives' BatchNorm batch.%
  }
  \label{tab:jestr}
  \begin{tabular}{lccccc}
    \toprule
    & & \multicolumn{2}{c}{JESTR-NR} & \multicolumn{2}{c}{Change with penalty} \\
    \cmidrule(lr){3-4}\cmidrule(lr){5-6}
    Recipe & Runs & MRR & Blind & MRR & Blind \\
    \midrule
    Published   & 9 & 0.230 & 0.118 & $-0.004$ & $+0.005$ \\
    Simplified  & 5 & 0.163 & 0.029 & $+0.013$ & $+0.056$ \\
    Reweighted  & 3 & 0.235 & 0.118 & $+0.020$ & $+0.068$ \\
    \bottomrule
  \end{tabular}
\end{table}

\section{Improved baselines}
\label{app:baselines}

Unless stated otherwise, all arms use frozen MoLFormer, in-batch negatives, $100$ molecule epochs, validation-selected checkpoints, and seed $42$ (App.~\ref{app:eval}).
Table~\ref{tab:baselines} reports results.

\textbf{Initialization and learning rates.}
DreaMS contrastive is unfrozen from the first step at $5\times10^{-6}$, $1.5\times10^{-5}$, or $5\times10^{-5}$ ($104.0$M trainable parameters, versus $8.4$M frozen).
MS-JEPA (A) and (B) use the same rates.
Validation selects (A) at $5\times10^{-5}$ (MRR $0.244$); (B) at that rate has higher test MRR by $0.014$ [0.001, 0.027].
For fresh encoders, validation selects $10^{-4}$ over $3\times10^{-4}$ ($0.235$ versus $0.230$).
These selected rates are retained throughout Table~\ref{tab:results}.

\textbf{Ablations.}
\emph{MaxSim} uses the scoring rule in App.~\ref{app:methods:msmoljepa}.
\emph{Simulated pretraining} uses $46{,}000$ steps of $256$ simulated spectra, the shared optimizer, and warm-up and cosine decay over those steps.
Real-data validation runs every $2{,}000$ steps.
The best checkpoint initializes the encoder, adapters, and temperature for real-data fine-tuning, with a fresh optimizer and schedule.
Global and MaxSim scoring have separate pretraining runs.
\emph{Candidate training} replaces in-batch negatives during fine-tuning from the MaxSim pretraining checkpoint (Eq.~\eqref{eq:cand}).

Table~\ref{tab:baselines}(b) reports successive changes.
After simulated pretraining, pretrained MS-JEPA exceeds the fresh encoder by $0.012$ [0.001, 0.024] MRR with MaxSim but trails by $0.024$ [0.001, 0.049] with candidate training.
Candidate training raises blind MRR from $0.013$ to $0.083$ (fresh) and $0.098$ (pretrained).

\textbf{Training length.}
Fresh encoders also run for $48$ and $200$ molecule epochs ($4{,}656$ and $19{,}400$ steps), with global and MaxSim scoring.
In all real-only end-to-end runs, training loss falls from $\ln64\approx4.2$ to $0.02$--$0.07$ (last $500$ steps), while validation peaks before training ends.
The $200$-epoch runs peak at steps $4{,}800$ (global, epoch $49$) and $11{,}640$ (MaxSim, epoch $120$), then lose $0.031$ and $0.020$ validation MRR.
Best validation MRR still rises with training length: $0.215$, $0.235$, $0.253$ globally and $0.234$, $0.240$, $0.248$ with MaxSim at $48$, $100$, $200$ epochs.
Test MRR shows no clear gain beyond $100$ epochs.
After simulated pretraining, fresh MaxSim validation peaks at the last step.

\begin{table}[h]
  \centering\scriptsize
  \caption{%
    \textbf{Improved baselines}, MassSpecGym test fold, one seed each.
    $\Delta$: paired MRR difference with a $95\%$ interval from $1{,}000$ bootstrap draws over test molecules (InChIKey-14).
    (a) Arms trained on real spectra; $\Delta$ against the reference named in each block.
    (b) One-change steps of Table~\ref{tab:results} (right) and pretraining on simulated spectra alone (sim only), for the fresh encoder and the pretrained MS-JEPA (A) encoder.%
  }
  \label{tab:baselines}
  \setlength{\tabcolsep}{4pt}
  \begin{tabular}{lccc}
    \multicolumn{4}{l}{(a)}\\
    \toprule
    Arm & MRR & R@1 & $\Delta$MRR [95\% CI] \\
    \midrule
    \multicolumn{4}{l}{\emph{DreaMS contrastive, backbone LR; $\Delta$ vs.\ frozen}} \\
    frozen & 0.227 & 0.122 & \\
    $5\times10^{-6}$ & 0.255 & 0.144 & $+0.028$ [0.016, 0.041] \\
    $1.5\times10^{-5}$ & 0.255 & 0.136 & $+0.028$ [0.016, 0.041] \\
    $5\times10^{-5}$ & 0.244 & 0.130 & $+0.018$ [0.005, 0.029] \\
    \multicolumn{4}{l}{\emph{MS-JEPA (A), aggregated, encoder LR; $\Delta$ vs.\ frozen}} \\
    frozen & 0.121 & 0.047 & \\
    $5\times10^{-6}$ & 0.210 & 0.107 & $+0.089$ [0.075, 0.104] \\
    $1.5\times10^{-5}$ & 0.236 & 0.124 & $+0.115$ [0.098, 0.132] \\
    $5\times10^{-5}$ (ladder) & 0.243 & 0.128 & $+0.121$ [0.105, 0.140] \\
    \multicolumn{4}{l}{\emph{MS-JEPA (B), [CLS], encoder LR; $\Delta$ vs.\ frozen}} \\
    frozen & 0.170 & 0.077 & \\
    $5\times10^{-6}$ & 0.219 & 0.116 & $+0.048$ [0.038, 0.059] \\
    $1.5\times10^{-5}$ & 0.235 & 0.128 & $+0.065$ [0.053, 0.078] \\
    $5\times10^{-5}$ & 0.257 & 0.143 & $+0.087$ [0.070, 0.104] \\
    \multicolumn{4}{l}{\emph{Fresh encoder, encoder LR; $\Delta$ vs.\ MS-JEPA (A) at $5\times10^{-5}$}} \\
    $1\times10^{-4}$ (ladder) & 0.232 & 0.124 & $-0.011$ [$-0.025$, 0.003] \\
    $3\times10^{-4}$ & 0.212 & 0.111 & $-0.031$ [$-0.045$, $-0.017$] \\
    \multicolumn{4}{l}{\emph{Fresh encoder, training length; $\Delta$ vs.\ 100 epochs (0.232 global, 0.241 MaxSim)}} \\
    global, 48 epochs & 0.216 & 0.113 & $-0.016$ [$-0.029$, $-0.003$] \\
    global, 200 epochs & 0.237 & 0.121 & $+0.005$ [$-0.011$, 0.020] \\
    MaxSim, 48 epochs & 0.244 & 0.126 & $+0.002$ [$-0.009$, 0.014] \\
    MaxSim, 200 epochs & 0.239 & 0.135 & $-0.003$ [$-0.018$, 0.012] \\
    \bottomrule
  \end{tabular}

  \vspace{6pt}
  \begin{tabular}{lcc}
    \multicolumn{3}{l}{(b)}\\
    \toprule
    Step & fresh & pretrained \\
    \midrule
    \multicolumn{3}{l}{\emph{$\Delta$MRR [95\% CI]}} \\
    real: global $\to$ MaxSim & $+0.010$ [$-0.008$, 0.027] & $-0.009$ [$-0.024$, 0.006] \\
    global: real $\to$ sim$\to$real & $+0.137$ [0.117, 0.157] & $+0.131$ [0.112, 0.151] \\
    sim$\to$real: global $\to$ MaxSim & $+0.094$ [0.075, 0.116] & $+0.102$ [0.081, 0.123] \\
    sim$\to$real, MaxSim: in-batch $\to$ candidate & $+0.103$ [0.080, 0.128] & $+0.067$ [0.044, 0.088] \\
    \quad same step, spectrum-blind MRR & $+0.070$ [0.057, 0.085] & $+0.085$ [0.070, 0.102] \\
    global: real $\to$ sim only & $+0.005$ [$-0.019$, 0.029] & $+0.017$ [$-0.006$, 0.038] \\
    MaxSim: real $\to$ sim only & $+0.105$ [0.077, 0.134] & $+0.110$ [0.082, 0.141] \\
    \midrule
    \multicolumn{3}{l}{\emph{sim only, MRR / R@1}} \\
    global & 0.236 / 0.140 & 0.260 / 0.158 \\
    MaxSim & 0.347 / 0.251 & 0.344 / 0.249 \\
    \bottomrule
  \end{tabular}
\end{table}

\section{Simulated spectra}
\label{app:sim}

The pretraining corpus of \S\ref{sec:baselines} is built in three steps: collect molecules and remove those near the test fold, simulate their spectra, and remove every molecule that the benchmark's retrieval ranks.

\textbf{Molecules.}
We pool MOSES \citep{polykovskiy_moses_2020}, COCONUT \citep{sorokina_coconut_2021}, DSSTox \citep{grulke_dsstox_2019}, HMDB \citep{wishart_hmdb_2022}, MassSpecGym's $3.92$M PubChem structures \citep{bushuiev_massspecgym_2024}, and the $2.50$M pretraining structures of \citet{liu_massspecgym_2026}: $7{,}627{,}340$ distinct canonical isomeric SMILES.
We remove, in order, structures that
\begin{enumerate}\itemsep1pt
  \item share 2D connectivity with any MassSpecGym molecule ($65{,}410$);
  \item have the Morgan or RDKit fingerprint of a test molecule ($1{,}734$);
  \item cannot form [M+H]$^+$ or [M+Na]$^+$, or are unstable on the LC time scale ($350{,}568$), by rules that exclude no MassSpecGym molecule;
  \item are not proven to lie at MCES distance $\ge10$, MassSpecGym's threshold, from every test molecule ($265{,}749$), using lower bounds or exact myopic MCES \citep{kretschmer_mces_2025}.
\end{enumerate}
The $6{,}943{,}879$ remaining structures and the $28{,}929$ MassSpecGym molecules form $6.19$M molecules (InChIKey first blocks).

\textbf{Simulators.}
GLACIER \citep{wang_glacier_2026} is the authors' non-contrastive checkpoint, reported as trained on the official MassSpecGym split.
FraGNNet \citep{young_fragnnet_2025} is retrained on the training-fold spectra.
Both models simulate [M+H]$^+$ spectra at eight normalized collision energies, the medians of equal-count bins of training [M+H]$^+$ NCE.
Two simulators with different errors act as domain randomization \citep{tobin_domain_2017}.

\textbf{Removing retrieval candidates.}
MassSpecGym molecules keep their folds; the others are split $98/1/1$ by hash, and only the training split is used ($6{,}064{,}935$ molecules).
From it we drop every validation or test query and every molecule in any of their mass- or formula-based candidate lists: $187{,}223$ molecules and $3.48$M spectra, $1{,}798$ of them MassSpecGym training molecules.
The remaining $5{,}877{,}712$ molecules carry $98.2$M spectra ($52.1$M GLACIER, $46.1$M FraGNNet).

\textbf{Sim-to-real differences.}
Three mismatches are corrected.
(i)~GLACIER predicts square-root intensities and its spectra are too flat; we square them and drop peaks below $6.1\times10^{-4}$ of the base peak, a floor fit to real peak counts on energy-matched validation pairs.
Spectra then average $27.9$ (GLACIER) and $29.9$ (FraGNNet) peaks, against $31.5$ measured.
(ii)~The pooled molecules are heavier than MassSpecGym's; reweighting by $25$~Da precursor bins lowers the 1-Wasserstein distance to measured precursor masses from $38.4$ (GLACIER) and $34.7$~Da (FraGNNet) to $0.94$~Da, with bins absent from the measured data, almost all above $1{,}000$~Da, given zero weight.
Each simulator receives half the probability.
(iii)~Simulated $m/z$ values are exact, whereas measured ones carry ppm-scale error; the augmentation shared with measured spectra (peak dropout $0.2$, log-normal intensity noise $\sigma=0.3$, $5$~ppm $m/z$ jitter) blurs them.
Other differences remain: one adduct, one instrument and eight energies, where measured spectra also include [M+Na]$^+$ and QTOF, and no noise or contaminant peaks.

\textbf{Before fine-tuning.}
Without measured training spectra (measured validation spectra select checkpoints), global scoring matches real-only MRR (Table~\ref{tab:baselines}(b)), with higher R@1 ($+0.016$ [$-0.005$, $0.035$] fresh, $+0.029$ [$0.007$, $0.050$] pretrained) but lower R@20 ($-0.067$ [$-0.113$, $-0.022$], $-0.069$ [$-0.111$, $-0.029$]).
MaxSim reaches $0.347$ (fresh) and $0.344$ (pretrained) MRR, $0.105$ and $0.110$ above real-only MaxSim training.
Pretrained initialization helps global scoring ($+0.023$ [$0.016$, $0.031$]) but not MaxSim ($-0.002$ [$-0.008$, $0.002$]).
Fine-tuning on measured spectra adds $0.114$--$0.132$ MRR.

\end{document}